\documentclass[11pt]{article}
\usepackage[margin=1in]{geometry}
\usepackage{amsmath,amssymb,amsthm,booktabs,microtype,graphicx,mathtools}
\usepackage{titlesec}
\usepackage{placeins}
\usepackage[round,authoryear]{natbib}
\usepackage[colorlinks,citecolor=blue,linkcolor=blue]{hyperref}

\newtheorem{proposition}{Proposition}
\newtheorem{lemma}[proposition]{Lemma}
\newtheorem{example}[proposition]{Example}
\newtheorem{corollary}{Corollary}
\theoremstyle{definition}
\newtheorem{remark}{Remark}

\newcommand{\Aa}{\widetilde{A}}
\newcommand{\one}{\mathbf{1}}

\title{An equality condition for the Dobrushin bound on attention rollout and how
often it holds in trained transformers}
\author{Przemys\l{}aw Rola%
  \thanks{Department of Mathematics, Krak\'ow University of Economics, e-mail: \texttt{przemyslaw.rola@outlook.com}}}
\date{}

\begin{document}
\maketitle
\begingroup\renewcommand\thefootnote{}\footnotetext{2020 \textit{Mathematics Subject
Classification.} 68T07, 60J10, 15B51.\quad\textit{Key words and phrases.} Attention
rollout, Dobrushin coefficient, stochastic matrices, residual connections,
transformers, attention sinks.}\endgroup

\begin{abstract}
The Dobrushin coefficient of each attention-rollout factor satisfies
$\kappa(\tfrac12(I+A))\le\tfrac12(1+\kappa(A))$, and multiplying these inequalities
over layers bounds the coefficient of the whole rollout. We characterise exactly
when the layerwise bound is tight: equality holds if and only if some token pair
attaining $\kappa(A)$ is \emph{mutually self-dominant} --- each of the two attends
to itself at least as strongly as the other attends to it. The condition is far
from automatic: uniformly random stochastic matrices satisfy it only $24$--$30\%$
of the time.

When tested on the head-averaged attention of each individual input and
restricted to content tokens --- image patches, words or tabular features,
excluding \texttt{cls}, register and separator tokens --- the condition holds for
essentially every input at every layer of DINOv2 (three model sizes), RoBERTa and
DistilBERT. In the supervised models DeiT-B and ViT-B/16 it holds for $91\%$ and
$64\%$ of input--layer pairs respectively, with all failures occurring late in
depth. In FT-Transformer trained on two standard tabular benchmarks it holds for
only $11$--$44\%$ of input--layer pairs. The special tokens account for almost all
failures in DINOv2 and the language models: when they are included, the
condition holds for only $82$--$97\%$ of input--layer pairs.
\end{abstract}

\section{Introduction}\label{sec:intro}

In a transformer, each attention layer mixes tokens. For every token it forms a
weighted average of all tokens' value vectors --- the weights being one row of a
row-stochastic attention matrix --- and adds the result to that token's
representation. Information is thereby redistributed across positions and the
representations are pulled toward one another. Iterating this aggregation can drive all tokens
toward a common representation ---
\emph{rank collapse} in the vision and language literature
\citep{dong2021}, \emph{oversmoothing} in the graph setting
\citep{oono2020,wu2023oversmoothing}. Whether, and how fast, a given trained
network approaches that regime is a question about a product of many mixing
operators, one per layer.

Attention rollout \citep{abnar2020rollout} is the standard way to summarise such
a product. It multiplies the attention matrices of successive layers into a single
row-stochastic operator describing how attention-mediated influence accumulates
with depth. To account for the residual (skip) connection, which passes each
token's representation to the next layer unchanged, each attention matrix $A$ is
first replaced by $\tfrac12(I+A)$, the identity $I$ standing for the skip
connection.
Because the result is row-stochastic, the classical theory of products of
inhomogeneous stochastic matrices applies to it directly.

\citet{biccari2026} make that connection explicit. They analyse rollout with
Doeblin--Dobrushin contraction theory, show that an operator with a small
Dobrushin coefficient is quantitatively close to a rank-one matrix, and control
the coefficient of the composed operator by a layerwise estimate: writing
$\Aa=\tfrac12(I+A)$ for a rollout factor,
$\kappa(\Aa)\le\tfrac12\big(1+\kappa(A)\big)$, which composes across $L$ layers
into a certificate carrying a factor $2^{-L}$. They also prove an unconditional
lower bound $\kappa(\Aa)\ge(N-2)/(2(N-1))$ on any single factor, and observe
empirically that the composed coefficient decreases with depth in trained
tabular transformers.

Their treatment therefore brackets the coefficient of each factor,
$\kappa(\Aa^{(\ell)})$, from above and from below, but leaves a question unanswered: \emph{when is the upper bound an
equality?} The question is not cosmetic. If the estimate is loose, the depth
certificate built from it may be improvable layer by layer, and the numbers it
produces say more about the estimate than about the network. If it is tight, no
layer's estimate can be improved; the depth certificate may still overstate the
coefficient of the whole rollout, but only because the coefficient of a product
of matrices can be smaller than the product of their coefficients.

This paper answers when the layerwise bound is an equality, and draws out the
consequences. Throughout, the object is the row-stochastic attention matrix $A$
--- the token-mixing pattern --- rather than the full multi-head attention
block of a layer, $X\mapsto\sum_hA_hXW_h^VW_h^O$ \citep{vaswani2017}, which also includes the
value and output projections,\footnote{In the notation of \citet{vaswani2017}, the multi-head block of a
self-attention layer, which all models here use, is
$\mathrm{Concat}(\mathrm{head}_1,\dots,\mathrm{head}_H)\,W^O$ with
$\mathrm{head}_h=\mathrm{Attention}(XW_h^Q,XW_h^K,XW_h^V)=A_hXW_h^V$, where $X$ is
the input of the block and $A_h$ is head $h$'s row-stochastic attention matrix.
Splitting $W^O$ into the row blocks $W_h^O$ belonging to the heads turns the
product into the sum $\sum_h\mathrm{head}_h\,W_h^O$. Biases add the same vector to
every token, since each $A_h$ has rows summing to one, and so do not affect the
token-mixing pattern.} so none of what follows certifies the complete computation;
(M1) in Section~\ref{sec:setup} states this formally.

\paragraph{Contributions.} (i) An exact characterisation of when the layerwise
bound of \citet{biccari2026} is an equality (Proposition~\ref{prop:exact}).
(ii) A test of the condition on the object their bound uses, the head-averaged
attention matrix of each input: on content tokens it holds for essentially every
input at every layer of DINOv2 and of two language models, less consistently in
two supervised vision transformers, whose failures come late in depth, and rarely
in FT-Transformer on tabular data; in DINOv2 and the language models the special
tokens --- \texttt{cls}, registers, separators --- account for almost all
failures. (iii) The observation that testing
attention averaged over inputs instead, a natural alternative, can both create
and destroy exactness. (iv) Two supporting measurements: the classical entry-wise
bound $\kappa(A)\le1-N\varepsilon$ is far looser than $\kappa(A)$ itself for
the numbers of tokens used in real transformers, and the Dobrushin coefficient of DINOv2's
attention averaged over heads and images is often attained at a pair involving a single
register token, and drops markedly when that token is left out.

Section~\ref{sec:setup} fixes notation and modelling choices,
Section~\ref{sec:exact} proves the characterisation, Section~\ref{sec:emp}
reports the measurements, Section~\ref{sec:limits} discusses limitations and
future work, and Section~\ref{sec:related} positions
the work.

\section{Setup}\label{sec:setup}

We adopt the setting and notation of \citet{biccari2026} throughout.

A transformer layer acts on a matrix $X\in\mathbb{R}^{N\times D}$ whose $N$ rows
are token representations of width $D$. Each layer adds its output to $X$ rather than
replacing it, $X\mapsto X+\mathrm{Layer}(X)$ --- a \emph{residual}, or skip,
connection --- so the representation passed from layer to layer, called the
\emph{residual stream}, accumulates the contributions of every layer. Self-attention with $H$ heads computes, for head $h$,
queries, keys and values $Q_h=XW_h^Q$, $K_h=XW_h^K$, $V_h=XW_h^V$ with
$W_h^Q,W_h^K,W_h^V\in\mathbb{R}^{D\times d_h}$, where $d_h$ is the per-head
query/key dimension ($d_h=D/H$ when the heads partition the width evenly, as in
the models studied here), and the attention matrix
\begin{equation}
A_h=\operatorname{softmax}\!\big(Q_hK_h^{\top}/\sqrt{d_h}\big)\in\mathbb{R}^{N\times N},
\end{equation}
row-wise, so that $A_h\one=\one$ and $(A_h)_{ik}\ge0$: each row is a
probability distribution over tokens. The head's contribution to the residual
stream is $A_hV_hW_h^O$, where $W_h^O$ is head $h$'s
slice of the output projection, $W_h^O\in\mathbb{R}^{d_h\times D}$; the layer sums these over heads, adds the
result to $X$ through the residual connection, and follows with normalisation and
an MLP. Throughout, $N$ denotes the number of tokens.

Let $A^{(\ell)}$ denote the (head-averaged, see (M2)) attention matrix of layer
$\ell=1,\dots,L$, and $\Aa^{(\ell)}=(1-\alpha)I+\alpha A^{(\ell)}$ the
corresponding residual factor. The depth-ordered product
\begin{equation}\label{eq:rollout}
R_L\;=\;\Aa^{(L)}\Aa^{(L-1)}\cdots\Aa^{(1)}
\end{equation}
is the transition matrix of a time-inhomogeneous Markov chain on the token index
(the layer applied first stands rightmost, since in this model representations
propagate as $X^{(\ell)}=\Aa^{(\ell)}X^{(\ell-1)}$); at $\alpha=\tfrac12$ it is attention
rollout \citep{abnar2020rollout}, and at $\alpha=1$ the pure-attention product.
\emph{Rank collapse} --- all tokens converging to a common representation --- is
then the \emph{weak ergodicity} of that chain: the rows of $R_L$ becoming
identical.

Dobrushin's ergodicity coefficient \citep{dobrushin1956,seneta2006} of a row-stochastic $A$ is
\begin{equation}\label{eq:dobrushin}
\kappa(A)\;=\;\tfrac12\max_{i,j}\big\|A_{i\cdot}-A_{j\cdot}\big\|_1
\;=\;1-\min_{i,j}\sum_{k}\min\!\big(A_{ik},A_{jk}\big)\;\in\;[0,1],
\end{equation}
with $\kappa(A)=0$ iff all rows coincide (i.e.\ $A$ has rank one).\footnote{We write $\kappa$ following \citet{biccari2026}; the Markov-chain
literature \citep{seneta2006,ipsen2011} writes $\tau_1$ for the same quantity.} Two properties make it
useful here: it contracts oscillation
$\operatorname{osc}(x)=\max_kx_k-\min_kx_k$, in the sense that
$\operatorname{osc}(Ax)\le\kappa(A)\operatorname{osc}(x)$,
and it is submultiplicative: for any row-stochastic
$M^{(1)},\dots,M^{(L)}$,
\begin{equation}\label{eq:submult}
\kappa\big(M^{(L)}\cdots M^{(1)}\big)\;\le\;\prod_{\ell=1}^{L}\kappa\big(M^{(\ell)}\big).
\end{equation}
We state it for general factors because it is applied below both to the
attention matrices $A^{(\ell)}$ and to the residual factors $\Aa^{(\ell)}$.
Both are classical \citep{seneta2006}. We also use that $\kappa$ is convex in
$A$, which is immediate from \eqref{eq:dobrushin}: $\kappa$ is a maximum of the
convex functions $A\mapsto\tfrac12\|A_{i\cdot}-A_{j\cdot}\|_1$.
\paragraph{Terminology.} We use a few terms in their technical senses.
For a row vector, $\|x\|_1=\sum_k|x_k|$; since attention rows are probability
distributions, $\tfrac12\|A_{i\cdot}-A_{j\cdot}\|_1$ is the total-variation
distance\footnote{In the usual convention,
$\mathrm{TV}(p,q)=\max_S|p(S)-q(S)|=\tfrac12\|p-q\|_1$; some authors call
$\|p-q\|_1$ itself the total variation.} between the distributions of tokens $i$
and $j$, so $\kappa(A)$ is the
total-variation distance between the two \emph{most dissimilar} attention rows.
A pair $(i,j)$ \emph{attains} $\kappa(A)$ when it realises that maximum; the
coefficient is decided entirely by such argmax pairs, and all other tokens are
invisible to it.
Concretely, the contraction property
$\operatorname{osc}(Ax)\le\kappa(A)\operatorname{osc}(x)$ says that, for any vector
$x$ with one value per token, the gap between the largest and smallest value
across tokens after mixing by $A$ is at most $\kappa(A)$ times what it was. At the level of token mixing, the residual
connection makes the operator acting on the residual stream the convex
combination $\Aa=(1-\alpha)I+\alpha A$ rather than $A$ itself, which is why every
statement below is about $\Aa$. By a \emph{certificate} we mean a bound
computable from forward passes alone, without training or search. An \emph{attention sink} \citep{xiao2024sinks} is a token receiving disproportionate attention
from many others. Finally, $\kappa$ relates to the more
familiar spectral notion by $\kappa(A)\ge|\lambda_2(A)|$ \citep{seneta1979}, where $\lambda_2(A)$ is the
eigenvalue of second-largest modulus (the largest is $1$), so
$1-\kappa(A)$ lower-bounds the spectral gap. The advantage of $\kappa$ over the
spectral gap here is that it is submultiplicative over \emph{different}
matrices, which is what a product of distinct layers requires.

Equation~\eqref{eq:submult} is a per-layer certificate obtained from a single forward
pass. Crucially it requires no time-homogeneity: it holds for arbitrary distinct
$A^{(\ell)}$, unlike results that assume the same parameters at every layer in order to
reach an asymptotic limit \citep{geshkovski2023,vo2025}.

\paragraph{Modelling choices.}
These are choices, not theorems, and every claim below is scoped to them.
\begin{enumerate}
\item[(M1)] A transformer layer is not a stochastic matrix. Only the token-mixing
part is; the output projection $W^O$, the MLP and the normalisation lie outside
the certificate.
\item[(M2)] With $H$ heads the layer is not a single chain: each head has its own
row-stochastic matrix $A_h$. Following \citet{biccari2026}, every statement
about a layer in Sections~\ref{sec:exact} and~\ref{sec:emp} refers to the head
average $\bar A=\frac1H\sum_h A_h$, which is again row-stochastic, and by
convexity $\kappa(\bar A)\le\frac1H\sum_h\kappa(A_h)$. Exactness, however, is
not preserved by averaging: it can hold for $\bar A$ while failing for
individual heads. We therefore also test the individual heads
(Section~\ref{sec:perinput}).
\item[(M3)] The residual branch is modelled as $\Aa=(1-\alpha)I+\alpha A$. At
$\alpha=\tfrac12$ this is exactly a rollout factor
\citep{abnar2020rollout,biccari2026}; we keep general $\alpha$
because Proposition~\ref{prop:exact} holds for every $\alpha\in(0,1)$, not only
for $\alpha=\tfrac12$. DINOv2 uses
LayerScale (a learned per-channel multiplier on each branch), so the effective
residual weight is not literally $\tfrac12$.
\item[(M4)] We test the \emph{content subchain}: the submatrix of $\bar A$ on the
content tokens alone --- image patches, words, or tabular features --- discarding
the special tokens, namely the \texttt{cls} token of the image models and of
FT-Transformer, the four registers of the DINOv2 models that have them, and the
classification and separator tokens \texttt{[CLS]} and \texttt{[SEP]} of the
language models (\texttt{<s>} and \texttt{</s>} in RoBERTa), with the remaining rows rescaled to sum to
one. For images we also call it the patch subchain.
This is a choice about \emph{which tokens} enter each attention matrix, separate
from how many inputs the matrix is averaged over. The subchain is a different
Markov chain from the original, not a restriction of it --- deleting states and
renormalising changes the dynamics --- so results on it do not transfer
automatically to the full token set. We report every model on both, and the
difference is itself informative (Section~\ref{sec:emp}). Because rescaling the
rows does not commute with averaging, we apply the restriction only after all
averaging: over heads and, where applicable (Section~\ref{sec:averaged}), over
inputs as well.
\end{enumerate}

\section{The residual bound and when it is attained}
\label{sec:exact}

\begin{lemma}[Residual bound; {\citealp[eq.~(A.6)]{biccari2026}} at $\alpha=\tfrac12$]
\label{prop:bound}
For row-stochastic $A$ and $\alpha\in[0,1]$, with $\Aa=(1-\alpha)I+\alpha A$,
\begin{equation}
\kappa(\Aa)\;\le\;(1-\alpha)+\alpha\,\kappa(A).
\end{equation}
\end{lemma}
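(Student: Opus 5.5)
The bound follows from convexity of $\kappa$, already noted in Section~\ref{sec:setup}, together with the trivial values $\kappa(I)$ and $\kappa(A)$. Since $\Aa=(1-\alpha)I+\alpha A$ is a convex combination of the two row-stochastic matrices $I$ and $A$ for $\alpha\in[0,1]$, convexity gives
\[
\kappa(\Aa)\;\le\;(1-\alpha)\,\kappa(I)+\alpha\,\kappa(A).
\]
It then remains to evaluate $\kappa(I)$. For $N\ge2$, any two distinct rows $e_i,e_j$ of the identity have disjoint supports, so $\tfrac12\|e_i-e_j\|_1=1$ and $\kappa(I)=1$. Substituting this yields the claim. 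The degenerate case $N=1$ is trivial because then every $\kappa$ vanishes.

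To avoid relying on the convexity remark, I would also write out the pairwise argument directly. For any pair $(i,j)$,
\[
\Aa_{i\cdot}-\Aa_{j\cdot}=(1-\alpha)(e_i-e_j)+\alpha(A_{i\cdot}-A_{j\cdot}).
\]
The triangle inequality for $\|\cdot\|_1$ then gives
\[
\tfrac12\|\Aa_{i\cdot}-\Aa_{j\cdot}\|_1\le(1-\alpha)\cdot\tfrac12\|e_i-e_j\|_1+\alpha\cdot\tfrac12\|A_{i\cdot}-A_{j\cdot}\|_1\le(1-\alpha)+\alpha\,\kappa(A).
\]
Here we use $\tfrac12\|e_i-e_j\|_1\le1$, and the second term is bounded by $\kappa(A)$ by definition~\eqref{eq:dobrushin}. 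Taking the maximum over $(i,j)$ proves the lemma.

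There is essentially no obstacle. The only point needing care is bookkeeping that matters later for Proposition~\ref{prop:exact}. Equality requires equality in the triangle inequality for some attaining pair. That means the vectors $e_i-e_j$ and $A_{i\cdot}-A_{j\cdot}$ must have coordinatewise compatible signs: $A_{ii}\ge A_{ji}$ and $A_{jj}\ge A_{ij}$. The pair $(i,j)$ must also attain $\kappa(A)$. I would record this observation in passing, since it is exactly the ``mutually self-dominant'' condition described in the abstract.
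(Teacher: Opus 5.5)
Your proof is correct and matches the paper's own: convexity of $\kappa$ together with $\kappa(I)=1$. Your direct triangle-inequality version is the route the paper attributes to \citet{biccari2026}, and it is also the pairwise computation the paper sharpens to prove Proposition~\ref{prop:exact}.
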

\begin{proof}
$\kappa$ is convex on the set of row-stochastic matrices and $\kappa(I)=1$, so
$\kappa((1-\alpha)I+\alpha A)\le(1-\alpha)\kappa(I)+\alpha\kappa(A)$. At
$\alpha=\tfrac12$ this recovers the estimate of \citet{biccari2026}, obtained
there by the triangle inequality.
\end{proof}

At $\alpha=\tfrac12$, \citet{biccari2026} compose their case of
Lemma~\ref{prop:bound} across depth into
\begin{equation}\label{eq:their-product}
\kappa(R_L)\;\le\;\prod_{\ell=1}^{L}\kappa\big(\Aa^{(\ell)}\big)
\;\le\;2^{-L}\prod_{\ell=1}^{L}\big(1+\kappa(A^{(\ell)})\big),
\end{equation}
and separately prove the unconditional floor
$\kappa(\Aa)\ge(N-2)/(2(N-1))$,\footnote{In their notation, $d$ stands for
$N$.} which tends
to $\tfrac12$ as the number of tokens grows. We state the results below for every $\alpha\in(0,1)$,
which the proof allows at no extra cost; all our measurements use
$\alpha=\tfrac12$.

The interesting question is when the second inequality --- the layerwise
estimate --- is tight, since that determines whether the certificate is
informative or merely valid.

\begin{proposition}[Exactness]\label{prop:exact}
Let $A$ be row-stochastic and $0<\alpha<1$. Then
\begin{equation}
\kappa(\Aa)=(1-\alpha)+\alpha\,\kappa(A)
\end{equation}
if and only if there exists a pair $(i,j)$ attaining
$\tfrac12\|A_{i\cdot}-A_{j\cdot}\|_1=\kappa(A)$ that satisfies
\begin{equation}\label{eq:cond}
A_{ii}\ \ge\ A_{ji}
\qquad\text{and}\qquad
A_{jj}\ \ge\ A_{ij}.
\end{equation}
\end{proposition}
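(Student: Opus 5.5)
Fix a pair $(i,j)$ with $i\ne j$ and compute $\tfrac12\|\Aa_{i\cdot}-\Aa_{j\cdot}\|_1$ directly; the pair-wise comparison will give both directions at once.

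The rows are $\Aa_{i\cdot}=(1-\alpha)e_i+\alpha A_{i\cdot}$ and similarly for $j$. Their difference agrees with $\alpha(A_{i\cdot}-A_{j\cdot})$ in every coordinate $k\notin\{i,j\}$. In the two remaining coordinates it equals
\[
(1-\alpha)+\alpha(A_{ii}-A_{ji})\quad\text{at }k=i,
\qquad
-(1-\alpha)-\alpha(A_{jj}-A_{ij})\quad\text{at }k=j.
\]
By the triangle inequality, $|(1-\alpha)+\alpha x|\le(1-\alpha)+\alpha|x|$, with equality iff $x\ge0$ (using $0<\alpha<1$, so that $1-\alpha>0$ and $\alpha>0$). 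Summing over coordinates gives
\[
\tfrac12\|\Aa_{i\cdot}-\Aa_{j\cdot}\|_1\le(1-\alpha)+\alpha\cdot\tfrac12\|A_{i\cdot}-A_{j\cdot}\|_1 ,
\]
with equality iff $A_{ii}\ge A_{ji}$ and $A_{jj}\ge A_{ij}$. This is the pairwise form of Lemma~\ref{prop:bound}, together with its equality case.

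For $i=j$ both sides are $0$. The bound becomes $0\le1-\alpha$, which is strict, so diagonal pairs never attain the bound. The inequality still holds for them, so the maximum can be taken over $i\ne j$ throughout; I would also handle $N=1$ separately.

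Next, take the maximum over pairs. For any pair,
\[
\tfrac12\|\Aa_{i\cdot}-\Aa_{j\cdot}\|_1\le(1-\alpha)+\alpha\cdot\tfrac12\|A_{i\cdot}-A_{j\cdot}\|_1\le(1-\alpha)+\alpha\,\kappa(A).
\]
Hence $\kappa(\Aa)$ equals the right-hand side iff some pair achieves equality in both steps simultaneously. Equality in the second step means $(i,j)$ attains $\kappa(A)$. Equality in the first step is exactly \eqref{eq:cond}.

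\emph{Sufficiency:} a pair that attains $\kappa(A)$ and satisfies \eqref{eq:cond} makes both inequalities tight, so $\kappa(\Aa)\ge(1-\alpha)+\alpha\kappa(A)$. Lemma~\ref{prop:bound} gives the reverse inequality.

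\emph{Necessity:} if equality holds, take a pair $(i,j)$ attaining $\kappa(\Aa)$. It must attain both inequalities above. Since $\alpha>0$, tightness of the second step forces $(i,j)$ to attain $\kappa(A)$. Tightness of the first step forces \eqref{eq:cond}. If the pair were diagonal, its value $0$ would equal $(1-\alpha)+\alpha\kappa(A)>0$, which is impossible.

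The main obstacle is bookkeeping in the equality case. The two diagonal coordinates must be handled with correct signs: the $j$-th coordinate carries $-(1-\alpha)$, so its condition reads $A_{jj}-A_{ij}\ge0$. One must also make sure that the strict positivity of both $1-\alpha$ and $\alpha$ is used exactly where needed. The rest is direct computation.
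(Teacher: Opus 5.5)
Your proof is correct and is essentially the paper's argument: the same coordinatewise triangle-inequality computation for a fixed off-diagonal pair, with equality exactly when $A_{ii}\ge A_{ji}$ and $A_{jj}\ge A_{ij}$; the same two-step chain maximised over pairs; and the same necessity step via a pair attaining $\kappa(\Aa)$, which must be off-diagonal because $(1-\alpha)+\alpha\kappa(A)>0$. The only slip is cosmetic: for $i=j$ the right-hand side of the pairwise bound is $1-\alpha$, not $0$, as your very next sentence correctly uses.
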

\begin{proof}
Fix $i\neq j$ and write $d=A_{i\cdot}-A_{j\cdot}$, so $d_i=A_{ii}-A_{ji}$ and
$d_j=A_{ij}-A_{jj}$. Then
\begin{equation*}
\Aa_{i\cdot}-\Aa_{j\cdot}=(1-\alpha)(e_i-e_j)+\alpha d,
\end{equation*}
with $e_i$ the $i$-th canonical basis vector, whose coordinates are
$(1-\alpha)+\alpha d_i$ at position $i$, $-(1-\alpha)+\alpha d_j$ at position
$j$, and $\alpha d_k$ elsewhere. By the triangle inequality,
\begin{align*}
\tfrac12\bigl\|\Aa_{i\cdot}-\Aa_{j\cdot}\bigr\|_1
&=\tfrac12\Bigl(\bigl|(1-\alpha)+\alpha d_i\bigr|+\bigl|(1-\alpha)-\alpha d_j\bigr|
+\alpha\textstyle\sum_{k\neq i,j}|d_k|\Bigr)\\
&\le(1-\alpha)+\alpha\cdot\tfrac12\|d\|_1\;\le\;(1-\alpha)+\alpha\,\kappa(A).
\end{align*}
The first inequality holds term by term: since $1-\alpha>0$,
$\bigl|(1-\alpha)+\alpha d_i\bigr|\le(1-\alpha)+\alpha|d_i|$ with equality if and
only if $d_i\ge0$, and $\bigl|(1-\alpha)-\alpha d_j\bigr|\le(1-\alpha)+\alpha|d_j|$
with equality if and only if $d_j\le0$ (as $|x+y|=|x|+|y|$ if and only if
$xy\ge0$). So the first inequality is an equality if and only if $(i,j)$
satisfies~\eqref{eq:cond}, and the second if and only if $(i,j)$ attains
$\kappa(A)$.

If some $\kappa(A)$-attaining pair satisfies~\eqref{eq:cond},
equality holds throughout for it; since the chain bounds every pair,
$\kappa(\Aa)=(1-\alpha)+\alpha\kappa(A)$.

Conversely, if
$\kappa(\Aa)=(1-\alpha)+\alpha\kappa(A)$, equality holds throughout for any pair
attaining $\kappa(\Aa)$, which has $i\neq j$ because this value is positive; as
$\alpha>0$, that pair attains $\kappa(A)$ and satisfies~\eqref{eq:cond}.
\end{proof}

\begin{example}[The bound can fail completely]\label{ex:swap}
Let $A=\begin{psmallmatrix}0&1\\1&0\end{psmallmatrix}$, the two-token swap. Its
rows are disjoint, so $\kappa(A)=1$ and Lemma~\ref{prop:bound} gives only
the trivial bound $\kappa(\Aa)\le1$. But
$\Aa=\tfrac12(I+A)=\begin{psmallmatrix}1/2&1/2\\1/2&1/2\end{psmallmatrix}$ has
identical rows, hence $\kappa(\Aa)=0$: the residual destroys the entire
difference instead of preserving half of it. The condition fails at the only
attaining pair, since $A_{11}=0<1=A_{21}$.
\end{example}

\begin{remark}[Interpretation]
Condition~\eqref{eq:cond} says that each of the two extremal tokens attends to
itself at least as strongly as the other extremal token attends to it --- a
\emph{mutual self-dominance} of a pair that realises the coefficient (the pair
need not be unique; the condition asks this of some attaining pair, not of all).
The residual scales every row by $\alpha$ and adds mass $1-\alpha$ to its
diagonal entry; mutual self-dominance is precisely the condition under which,
for that pair, the added mass reinforces the existing difference between the two
rows rather than partially cancelling it. Example~\ref{ex:swap} is the opposite
extreme, where it cancels the difference entirely. The condition fails at a pair when one of its two
tokens receives more attention from the other than it pays itself --- a pattern
that an attention sink, receiving attention from many tokens, can produce when it
is one of the pair.
\end{remark}

\begin{remark}[Size of a failure]\label{rem:size}
The computation in the proof also bounds the size of a failure. For any pair
$(i,j)$ attaining $\kappa(A)$,
\[
0\le(1-\alpha)+\alpha\kappa(A)-\kappa(\Aa)\le
\alpha\bigl[(A_{ji}-A_{ii})^+ + (A_{ij}-A_{jj})^+\bigr],
\]
and the bound on the gap is sharp: in Example~\ref{ex:swap} both sides equal $1$. A failure is therefore never larger than $\alpha$ times the total amount
by which the two inequalities of~\eqref{eq:cond} fail, and each of these amounts is
at most one attention entry. When attention is spread over many tokens its
entries are small, and so are its failures: uniformly random matrices with $256$
tokens (the uniform model below) fail condition~\eqref{eq:cond} about three times
in four, but at $\alpha=\tfrac12$ by a median gap of only $0.5\%$ of $\tfrac12\kappa(A)$.
\end{remark}

\begin{corollary}[Factorwise tightness]\label{cor:floor}
Let $0<\alpha<1$. If the condition of Proposition~\ref{prop:exact} holds at
every layer, then $\kappa(\Aa^{(\ell)})=(1-\alpha)+\alpha\kappa(A^{(\ell)})$ for
all $\ell$. In particular, at $\alpha=\tfrac12$ the second inequality
in~\eqref{eq:their-product} is an equality:
\begin{equation*}
\prod_{\ell=1}^{L}\kappa\big(\Aa^{(\ell)}\big)\;=\;2^{-L}\prod_{\ell=1}^{L}\big(1+\kappa(A^{(\ell)})\big).
\end{equation*}
\end{corollary}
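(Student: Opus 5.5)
The corollary follows by applying Proposition~\ref{prop:exact} separately at each layer and then multiplying the resulting equalities.

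Fix $\alpha\in(0,1)$ and a layer $\ell\in\{1,\dots,L\}$. By hypothesis, the head-averaged matrix $A^{(\ell)}$ has a pair $(i_\ell,j_\ell)$ that attains $\kappa(A^{(\ell)})$ and satisfies the mutual self-dominance condition~\eqref{eq:cond}. This is exactly the "if" direction of Proposition~\ref{prop:exact}, which therefore gives
\[
\kappa\bigl(\Aa^{(\ell)}\bigr)=(1-\alpha)+\alpha\,\kappa\bigl(A^{(\ell)}\bigr).
\]
Nothing links different layers: the proposition is a statement about a single row-stochastic matrix, and the pairs $(i_\ell,j_\ell)$ may change from layer to layer. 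So the first claim holds for every $\ell$.

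For the displayed identity, set $\alpha=\tfrac12$. The layerwise equalities then read $\kappa(\Aa^{(\ell)})=\tfrac12\bigl(1+\kappa(A^{(\ell)})\bigr)$. All quantities are nonnegative, so multiplying these $L$ equalities gives
\[
\prod_{\ell=1}^{L}\kappa\bigl(\Aa^{(\ell)}\bigr)=2^{-L}\prod_{\ell=1}^{L}\bigl(1+\kappa(A^{(\ell)})\bigr).
\]
This is precisely the second inequality in~\eqref{eq:their-product} holding with equality.

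Most of the work is already done in Proposition~\ref{prop:exact}, so no step here is a real obstacle. The one point to keep straight is scope. The corollary asserts equality only in the second, layerwise inequality of~\eqref{eq:their-product}. It says nothing about the first inequality, the submultiplicativity~\eqref{eq:submult}, which can remain strict because $\kappa$ of a product may be smaller than the product of the factors' coefficients. I would add a remark to this effect so the corollary is not read as a claim that $\kappa(R_L)$ itself equals the certificate.
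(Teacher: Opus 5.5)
Your proposal is correct and matches the paper's argument, which the paper leaves implicit: apply the ``if'' direction of Proposition~\ref{prop:exact} separately at each layer, then multiply the $L$ resulting equalities at $\alpha=\tfrac12$. Your closing caveat is also the paper's point: only the layerwise step becomes an equality, while the submultiplicativity step can remain strict, as the paper explains in the paragraph right after the corollary.
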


The depth certificate~\eqref{eq:their-product} bounds $\kappa(R_L)$ in two steps:
submultiplicativity, $\kappa(R_L)\le\prod_\ell\kappa(\Aa^{(\ell)})$, and then the
layerwise estimate applied to each factor. Under the condition the second step
is an equality at every layer, so the certificate cannot be improved by
sharpening the layerwise estimate. Whatever gap remains between the certificate
and $\kappa(R_L)$ comes from the first step, which is typically strict: the
coefficient of a product of matrices is usually much smaller than the product of
their coefficients. A tighter certificate would have to use more information
about the attention matrices than their coefficients.

Condition~\eqref{eq:cond} cannot be dropped: without it the bound can fail, and
Example~\ref{ex:swap} shows that it can fail by the largest possible amount. How
often random matrices satisfy the condition depends on how they are generated, so
we use three random models, each at $N=4$, $8$, $32$, $128$ and $256$ tokens with
$1000$ matrices per size. Under the \emph{uniform} model each row is a random
probability vector, with all probability vectors equally likely: it is drawn
independently from the uniform distribution on the probability simplex
$\{p\in\mathbb{R}^N:p_k\ge0,\ \sum_kp_k=1\}$ (the Dirichlet distribution with all
parameters equal to one); under the \emph{softmax} model each row is
the softmax of independent standard Gaussian logits. Both satisfy the condition
in only $24$--$30\%$ of cases, at every size. The third model imitates two
features common in trained attention: every token sends an extra share of its
attention to one common \emph{sink} token, and an extra share to itself. Row $i$
is $(1-a-b)\,u_i+a\,e_s+b\,e_i$, where $u_i$ is a random row drawn from the
uniform model, $e_s$ and $e_i$ are the unit vectors of the sink token $s$, chosen uniformly at
random, and of token $i$, and $a,b\in[0,\tfrac12]$ are the weights of the two extra
shares. Since $a+b\le1$, each row is again a probability vector. The weights are drawn once per
matrix, since a sink attracts attention from every token of a head; to include
matrices with neither feature, either one, or both, $a$ and $b$ are drawn
independently, each equal to zero with probability $\tfrac12$ and otherwise
uniform on $[0,\tfrac12]$. The extra weight on the diagonal favours the
condition, which asks each extremal token to attend to itself at least as
strongly as the other attends to it, and the condition now holds in $58$--$63\%$
of cases. The exact rate depends on these choices; the point is that building in
self-preference makes the condition more common, but still far from certain. It is
therefore not automatic, which is what makes its behaviour in trained models
(Section~\ref{sec:emp}) a substantive observation rather than a restatement of
Lemma~\ref{prop:bound}.

Two remarks delimit what the depth bound can say. It is one-sided: a small
product of layer coefficients shows that the rollout is close to rank one, but a
large product shows nothing, since $\kappa(R_L)$ can lie far below it. And where
the condition fails the per-layer estimate is loose, so the bound understates how
much such a layer mixes: a failure of the condition is a limitation of the bound,
not a property of the network.

\section{Empirical results}\label{sec:emp}

\paragraph{What is tested.} In the bound of \citet{biccari2026}, the layerwise
estimate is applied to one input at a time. For each input $x$, the heads of
layer $\ell$ are averaged into the input-dependent matrix $\bar A^{(\ell)}(x)$,
and the coefficient of that input's rollout is bounded by~\eqref{eq:their-product}.
For a dataset, the input-specific rollouts are averaged, and the coefficient of
the average is bounded by the average of the right-hand sides
of~\eqref{eq:their-product}, as in their (A.7) and (A.8). Each factor
$\tfrac12\big(1+\kappa(\bar A^{(\ell)}(x))\big)$ in these bounds is the layerwise
estimate of Lemma~\ref{prop:bound} applied to the matrix of one input at one
layer. Whether their estimate is tight is therefore a question about
the matrices $\bar A^{(\ell)}(x)$ of individual inputs, and this is our primary
test: for every input and layer we evaluate the gap of
Proposition~\ref{prop:exact} on that input's head-averaged matrix, and report the
share of input--layer pairs at which the bound is attained, that is, at which
condition~\eqref{eq:cond} holds. Averaging over the inputs before testing gives a
different object, $\langle\bar A^{(\ell)}\rangle$, the mean of $\bar A^{(\ell)}(x)$ over the inputs, which
behaves differently; we report it separately (Section~\ref{sec:averaged}).

\paragraph{Models and data.} We test DINOv2 \citep{oquab2024dinov2} in its
released variant with register tokens, at three sizes (ViT-S/14, ViT-B/14 and
ViT-L/14, with $12$, $12$ and $24$ layers), and, as a control, the ViT-B/14
variant trained without them. \emph{Register tokens} are extra learnable vectors appended to the input sequence that
correspond to no image patch: \citet{darcet2024registers} introduced them because
vision transformers otherwise repurpose a few patch tokens to carry high-norm
global information. We further test two ViT-B/16 models trained with supervision
on ImageNet, the original ViT \citep{dosovitskiy2021vit} and DeiT-B
\citep{touvron2021deit}, and the bidirectional language encoders RoBERTa-base
\citep{liu2019roberta} and DistilBERT \citep{sanh2019distilbert}. Image models
are evaluated on $512$ images drawn at random from ImageNetV2
\citep{recht2019imagenetv2}, language models on $512$ sequences of exactly $128$
tokens with no padding, drawn at random from the training split of WikiText-103
\citep{merity2017}, so that every attention row is a distribution over the
full sequence. For every model we test attention among the content
tokens, image patches, words or tabular features. The special tokens --- the \texttt{cls} token,
the registers, and the classification and separator tokens \texttt{[CLS]} and \texttt{[SEP]}
(\texttt{<s>} and \texttt{</s>} in RoBERTa) --- remain part of the model and
influence its attention, but they are removed from the matrices we test, which is
the content subchain of (M4). We also test every model on all of its tokens; the
difference shows where the condition breaks. We chose
bidirectional encoders rather than decoder-only models such as GPT-2 for a reason
specific to the coefficient. In a decoder-only model each token may attend only to
itself and earlier tokens, so the first token attends only to itself: its
attention row is $e_1=(1,0,\dots,0)$. That row overlaps any other row $j$ only in
$A_{j1}$, the attention token $j$ pays to the first position, so by the second
form of~\eqref{eq:dobrushin}, $\kappa(A)\ge1-\min_jA_{j1}$. The next rows are
constrained in the same way, more weakly: row $i$ can overlap another row only on
the first $i$ positions. The coefficient is therefore governed by how much the
tokens attend to the first few positions, a consequence of the mask, rather than
by mixing across the sequence.

\paragraph{Sample size and precision.} The shares in Table~\ref{tab:perinput} use
$512$ inputs per pretrained model. Larger samples give the same picture: on all $10\,000$
images of ImageNetV2 for DINOv2-B, DeiT-B and ViT-B/16, and on $4096$ sequences
for RoBERTa, the overall share changes by at most $0.2$ percentage points and the
share at any single layer by at most $2.6$; in particular, DINOv2-B fails at no
layer beyond the first for any of the $10\,000$ images.\footnote{The larger
samples also reveal failures too rare for $512$ inputs to detect at two further
layers: in $4$ of $10\,000$ images at layer $6$ of ViT-B/16, and in $2$ at layer
$10$ of DeiT-B.} We count a gap as zero when it is below
$10^{-10}$; Appendix~\ref{app:precision} explains this choice and shows that the
counts barely depend on it.

\subsection{The condition on each input's attention}\label{sec:perinput}

\begin{table}[t]
\centering
\small
\setlength{\tabcolsep}{3.5pt}
\begin{tabular}{llcccc}
\toprule
& & \multicolumn{3}{c}{per input} & averaged\\
\cmidrule(lr){3-5}\cmidrule(lr){6-6}
model & training & content tokens & all tokens & heads & exact layers\\
\midrule
DINOv2-S               & self-supervised & $6\,144/6\,144$ ($100\%$)     & $91.2\%$ & $97.2\%$ & $12/12$\\
DINOv2-B               & self-supervised & $6\,137/6\,144$ ($99.9\%$)    & $83.8\%$ & $95.9\%$ & $12/12$\\
DINOv2-L               & self-supervised & $12\,285/12\,288$ ($99.98\%$) & $81.9\%$ & $91.4\%$ & $24/24$\\
DINOv2-B, no registers & self-supervised & $6\,141/6\,144$ ($99.95\%$)   & $97.4\%$ & $94.0\%$ & $12/12$\\
DeiT-B                 & supervised      & $5\,582/6\,144$ ($90.9\%$)    & $90.7\%$ & $82.3\%$ & $11/12$\\
ViT-B/16               & supervised      & $3\,954/6\,144$ ($64.4\%$)    & $64.1\%$ & $60.7\%$ & $11/12$\\
RoBERTa-base           & masked LM       & $6\,125/6\,144$ ($99.7\%$)    & $89.7\%$ & $97.3\%$ & $12/12$\\
DistilBERT             & distillation    & $3\,072/3\,072$ ($100\%$)     & $96.8\%$ & $94.2\%$ & $6/6$\\
\midrule
FT-T, California, 3 blocks & supervised  & $18.7\%$ & $16.1\%$ & --- & $2/6$\\
FT-T, California, 6 blocks & supervised  & $11.3\%$ & $9.8\%$  & --- & $3/15$\\
FT-T, Higgs, 3 blocks      & supervised  & $44.4\%$ & $32.3\%$ & --- & $3/6$\\
\bottomrule
\end{tabular}
\caption{How often condition~\eqref{eq:cond} holds: per input, the object of the
depth bound of \citet{biccari2026}, and for attention averaged over inputs.
\emph{Content tokens}: share of input--layer pairs for which the input's
head-averaged attention matrix, restricted to content tokens (M4), satisfies the
condition, across $512$ images or sequences per pretrained model. \emph{All
tokens}: the same share without restricting to content tokens. \emph{Heads}:
share of head--input--layer triples --- one head, at one layer, on one input ---
for which that head's attention matrix, restricted to content tokens, satisfies
the condition, across the same $512$ inputs; not
computed for FT-T. \emph{Averaged}: number of layers for which
$\langle\bar A^{(\ell)}\rangle$, restricted to content tokens, satisfies the
condition (Section~\ref{sec:averaged}). The content tokens are the $256$ patches
of DINOv2, the $196$ patches of DeiT-B and ViT-B/16, the $126$ words of the
language models, and the $8$ or $28$ features of FT-Transformer (FT-T,
Section~\ref{sec:ftt}), which is evaluated over three seeds with $4000$ test rows
each.}
\label{tab:perinput}
\end{table}

In DINOv2 the condition holds for essentially every image at every layer
(Table~\ref{tab:perinput}): at $24\,566$ of the $24\,576$ image--layer pairs of
DINOv2-S, -B and -L, the variants trained with registers, against the $24$--$30\%$ at which random
matrices satisfy it (Section~\ref{sec:exact}). All ten failures are at layer~$1$ --- unlike those of the other
encoders below, which come late in depth --- and all are small. We measure the
size of a failure by its gap as a fraction of $\tfrac12\kappa(\bar A)$, the part
of the bound that depends on the attention: under exactness, at our
$\alpha=\tfrac12$, $\kappa(\Aa)=\tfrac12+\tfrac12\kappa(\bar A)$, and the constant
$\tfrac12$ comes from the skip connection and is the same for every matrix. By this measure,
at each of the ten failures the bound overstates the attention's part by at most
$3\%$, and their gaps, between
$3.6\times10^{-5}$ and $3.6\times10^{-3}$, are far above numerical precision
(Appendix~\ref{app:precision}). DINOv2-B trained without registers behaves the
same way, so training with registers is not what makes the patch subchain exact.
The two language models behave like DINOv2 on their content tokens: RoBERTa
satisfies the condition at $6\,125$ of its $6\,144$ sequence--layer pairs and
DistilBERT at all $3\,072$.

The two supervised vision transformers are the exception. Even on their patch
tokens the condition fails for some inputs, late in depth
(Figure~\ref{fig:perinput}): in DeiT-B at layer $11$ for $10\%$ of images and at
layer $12$ for all of them, and in ViT-B/16 for most images at layers $8$, $9$,
$10$ and $12$, and for $27\%$ and $46\%$ of them at layers $7$ and $11$. Most of
these failures are small in the sense just defined: at the median failure the
bound overstates the attention's part by $1.6\%$ in DeiT-B and $3.7\%$ in ViT-B/16.
With so many tokens this is expected rather than informative
(Remark~\ref{rem:size}): what distinguishes trained attention is how often the
gap is exactly zero. We do
not know why they arise late in depth, or why only in these two models.

\paragraph{Special tokens are where the condition breaks.} Including the special
tokens changes the picture for DINOv2 and the language models
(Table~\ref{tab:perinput}, bottom row of Figure~\ref{fig:perinput}). On all
tokens, DINOv2 with registers satisfies the condition for only $82$--$91\%$ of
image--layer pairs: at layer $1$ nearly every image fails, and in the later
layers of DINOv2-B and -L many do. RoBERTa and DistilBERT drop to $90\%$ and
$97\%$, failing mainly at layers $9$ and $12$ and at layer $6$ respectively; removing
the two special tokens of each sequence, which leaves only its $126$ text tokens,
removes $713$ of the $732$ failures. This is consistent
with the observation of \citet{clark2019} that over half of BERT's attention in layers
$6$ to $10$ goes to \texttt{[SEP]}. Without registers, the \texttt{cls} token
alone lowers DINOv2-B to $97\%$, mostly at layer $11$. In the two supervised
vision transformers, by contrast, including the \texttt{cls} token changes almost
nothing ($90.7\%$ and $64.1\%$ on all tokens, against $90.9\%$ and $64.4\%$ on
patches): their failures lie among the patches themselves.

\IfFileExists{fig_perinput.pdf}{%
\begin{figure}[t]\centering
\includegraphics[width=\textwidth]{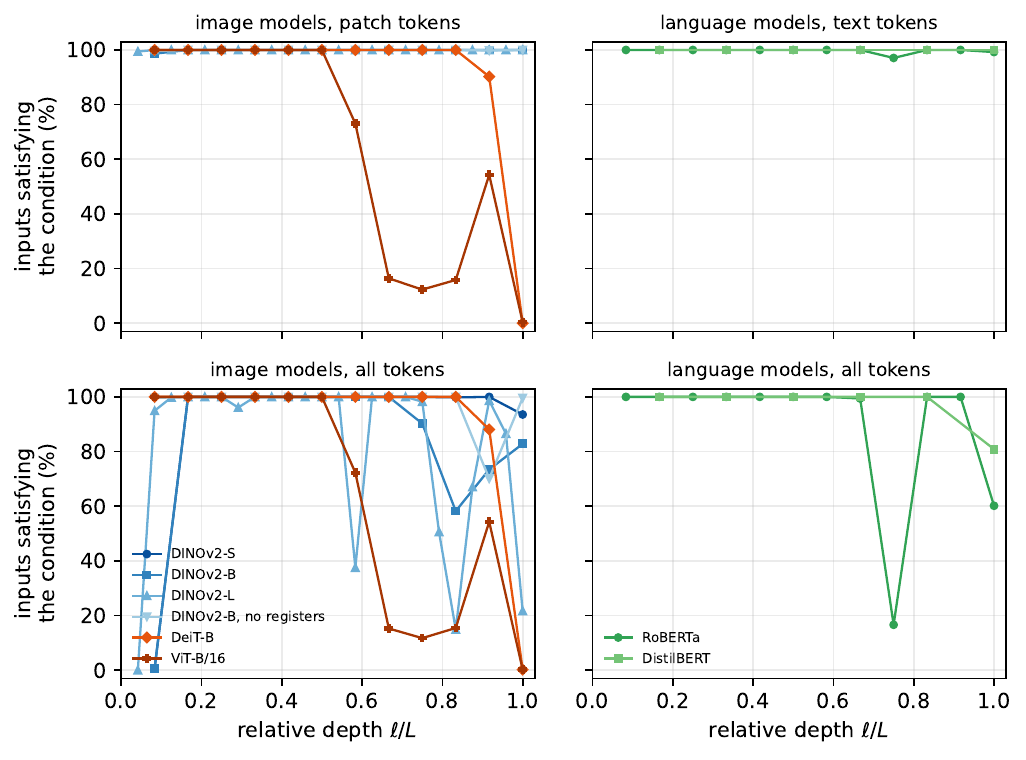}
\caption{Percentage of the $512$ images or sequences whose head-averaged
attention matrix at that layer satisfies condition~\eqref{eq:cond}, layer by layer, against relative depth $\ell/L$, for the pretrained
models of Table~\ref{tab:perinput} ($512$ images or sequences each), on content
tokens (top) and on all tokens (bottom). In the top-left panel the four DINOv2
curves lie on top of one another at $100\%$. Appendix~\ref{app:perlayer} gives
the exact number of failing inputs at every layer where there are any.}
\label{fig:perinput}\end{figure}}{}

Individual heads, reported in the last-but-one column of
Table~\ref{tab:perinput}, satisfy the condition for $61$--$97\%$ of
head--input--layer triples (one head, at one layer, on one input), far more often than random matrices, so the head average of (M2) is not
what produces the result.

\paragraph{What the certificate attains.} For a single image $x$, the depth
bound~\eqref{eq:their-product} of \citet{biccari2026} is
\[
B(x)\;=\;\prod_{\ell=1}^{L}\tfrac12\Big(1+\kappa\big(\bar A^{(\ell)}(x)\big)\Big),
\]
an upper bound on $\kappa(R_L(x))$, the coefficient of that image's rollout. For
a dataset they bound the coefficient of the averaged rollout
$\langle R_L\rangle$, the mean of $R_L(x)$ over the images, by the mean of
$B(x)$.

Where condition~\eqref{eq:cond} holds, the corresponding factor of $B(x)$ is exact; in
DINOv2 this is so for all but ten of the $24\,576$ factors. Yet the bound remains
loose. Over our $512$ images the mean of $B(x)$ is $0.44$, $0.36$ and $0.065$
for DINOv2-S, -B and -L, while the coefficient it bounds,
$\kappa(\langle R_L\rangle)$, is $0.093$, $0.075$ and $0.0022$ on the same
images: the bound is five to thirty times too large, although almost every factor
in it is exact.

The excess comes from the two other steps in its derivation: submultiplicativity,
$\kappa(R_L(x))\le\prod_\ell\kappa(\Aa^{(\ell)}(x))$, since the coefficient of a
product of matrices is usually much smaller than the product of their
coefficients; and averaging over images, which by convexity can only lower the
coefficient.

\paragraph{The attention analysis predicts a collapse that does not happen.}
If tokens were mixed by attention alone, as the rollout models them, DINOv2-L
would leave every token with nearly the same representation: its rollout over $24$
layers, averaged over images, has coefficient $0.0022$, so by Theorem~2.3 of
\citet{biccari2026} it is close to a matrix whose rows are all equal --- one that
gives every token the same weighted average of the inputs. Yet DINOv2-L does not collapse. Its patch
features support per-pixel tasks such as semantic segmentation and depth
estimation from a single image \citep{oquab2024dinov2}, which would be impossible
if the patch tokens had become interchangeable. The resolution lies in what the
rollout leaves out. It tracks only the attention matrices, so the value
projections, the MLP and the normalisation lie outside it (M1), and it weights
the residual branch by $\tfrac12$, whereas DINOv2 weights it by learned
per-channel factors (M3). Whatever keeps the representations apart must lie in
these parts of the network. This is a caution about the rollout framework itself --- ours and
\citet{biccari2026}'s alike --- rather than a finding about DINOv2: a
near-collapsed rollout is compatible with a model that is working perfectly well.

\subsection{Averaging over inputs}\label{sec:averaged}

Averaging attention over inputs before testing is natural when one wants to
describe a layer's typical attention pattern. The
last column of Table~\ref{tab:perinput} reports it: for each model $\langle\bar A^{(\ell)}\rangle$ is
formed from the same inputs and tested. On content tokens, the averaged matrices of the eight pretrained models satisfy
the condition at $100$ of their $102$ layers; the two exceptions are the last
layers of DeiT-B and ViT-B/16. This verdict is stable: on the larger samples of
Section~\ref{sec:emp} --- all $10\,000$ images for DINOv2-B, DeiT-B and ViT-B/16, and
$4096$ sequences for RoBERTa --- it is the same at every layer, and for all three
DINOv2 models also over five independent draws of $512$ images.

But the averaged test answers a different question, and it can agree or
disagree with the per-input test, because exactness is not preserved by
averaging in either direction. In ViT-B/16 the averaged matrix is exact at
layers $7$ to $11$, although $27\%$ to $88\%$ of individual images fail there:
averaging \emph{creates} exactness. In RoBERTa on all tokens, $509$ of $512$
sequences are exact at layer $8$, but the averaged matrix is not (its gap is
$7.4\times10^{-3}$): averaging \emph{destroys} it. Averaging also lowers the coefficient. For DINOv2-B, $\langle\bar A^{(\ell)}\rangle$
has $\kappa$ between $0.19$ and $0.78$ across layers, while the median over
images of the per-image coefficient $\kappa(\bar A^{(\ell)}(x))$ lies between $0.43$
and $0.94$, because an average of attention
patterns is more uniform than the patterns themselves. The coefficient of an
averaged matrix describes how much the typical pattern mixes, not how much the
attention of any input does.

Individual heads averaged over inputs, on content tokens, are exact for $570$ of
the $600$ DINOv2 heads, $141$ of $144$ in RoBERTa and $70$ of $72$ in
DistilBERT. On all tokens the averaged matrices fail more often, as the
per-input test does: at five of the twelve layers of DINOv2-B, for instance, each
time with a register in the extremal pair.

\paragraph{Column dominance is sufficient but not the explanation.} One simple
sufficient condition exists. If every diagonal entry is the largest in its column
--- $A_{ii}\ge A_{ki}$ for all $i$ and $k$, so that no token attends to $i$ more
strongly than $i$ attends to itself --- then both inequalities of~\eqref{eq:cond}
hold at every pair, and exactness follows whichever pair attains $\kappa$. The
property is demanding: among the uniform and softmax random matrices of
Section~\ref{sec:exact}, none of the $1000$ at each of $8$, $32$, $128$ and $256$
tokens has it. It is common among the DINOv2
heads averaged over images, $\langle A_h^{(\ell)}\rangle$: $308$ of the $600$
have it, and all of them satisfy condition~\eqref{eq:cond}, as the argument
requires. But it does not explain the regularity: $262$ of the $570$ heads that
satisfy the condition lack it, and $\langle\bar A^{(\ell)}\rangle$ satisfies the
condition at all $48$ layers of the three models, although it is column dominant
at only $29$ of them, and at four of them not in a single column. Which property governs the condition
remains open.

\subsection{A contrasting case: FT-Transformer}\label{sec:ftt}

Condition~\eqref{eq:cond} is not universal. We trained FT-Transformer \citep{gorishniy2021},
a widely used transformer architecture for tabular data, on two standard
benchmarks, California Housing \citep{pace1997} with $8$ features and Higgs
\citep{baldi2014} with $28$, using its authors' own data splits, preprocessing
and default configuration of three blocks, and additionally with six blocks on
California Housing; its tokens are one \texttt{cls} token and one token per
feature, $9$ in all on California Housing and $29$ on Higgs (details in
Appendix~\ref{app:tabular}). The models reproduce the
authors' own results: on Higgs our three seeds reach $0.725$--$0.728$ accuracy
against their $0.721$--$0.724$, and on California Housing they come within about
$0.01$ of their RMSE. Yet per input, on its feature tokens, condition~\eqref{eq:cond} holds for only $11$--$44\%$
of input--layer pairs, and for $10$--$32\%$ with the \texttt{cls} token included
(Table~\ref{tab:perinput}): far below the other models, and in two of the three
configurations, both on California Housing, below the rate for random
matrices. Few tokens do not by themselves explain this: random matrices satisfy the
condition about as often with few tokens as with many
(Section~\ref{sec:exact}). The size of the failures
depends on the dataset: at the median failure the bound overstates the
attention's part of the coefficient, $\tfrac12\kappa(\bar A)$, by $5\%$ on Higgs and by $23$--$38\%$ on California Housing.
These larger failures are consistent with Remark~\ref{rem:size}: with only $29$ or
$9$ tokens, single attention entries are large.

\subsection{The entry-wise route does not bind at realistic numbers of tokens}
\label{sec:entrywise}

Some asymptotic arguments that attention mixes tokens rest on a lower bound,
uniform over all layers, for the entries of the attention matrices. \citet{wu2023oversmoothing}, for instance,
use a constant $\varepsilon>0$
bounding every entry of every attention matrix from below, which they show
exists (their Lemma~2). Such an
$\varepsilon$ bounds the coefficient directly: if every entry is at least
$\varepsilon$, any two rows overlap in at least $N\varepsilon$, so by the second
form of \eqref{eq:dobrushin}, $\kappa(A)=1-\min_{i,j}\sum_k\min(A_{ik},A_{jk})
\le1-N\varepsilon$ --- the classical Doeblin bound. The question is how large
$\varepsilon$ is in a real transformer.

It is small. We measure it on the patch subchain of DINOv2, in the case most
favourable to the entry-wise route: the matrix $\langle\bar A^{(\ell)}\rangle$, whose smallest entry is
at least the smallest entry over all individual heads and inputs, because
averaging fills in near-zero entries. Even there the Doeblin bound is loose rather
than informative: it exceeds the directly computed coefficient of that matrix,
$\kappa(\langle\bar A^{(\ell)}\rangle)$, at
every layer (Figure~\ref{fig:entrywise}), by a factor between $1.06$ and $4.5$,
and on average by $24$ to $41\%$ per layer in the three models, an excess that
compounds over depth. The bound that the asymptotic arguments need must hold for every
head and every input at once; its $\varepsilon$ can therefore be no larger than
the smallest entry in this favourable case, and the bound it gives can only be
looser. This is not a defect in the asymptotic results, which concern limits and
are correct as such; it is a statement about what the same quantity can certify
for the number of tokens of a deployed model.

\IfFileExists{fig_entrywise.pdf}{%
\begin{figure}[t]\centering
\includegraphics[width=\textwidth]{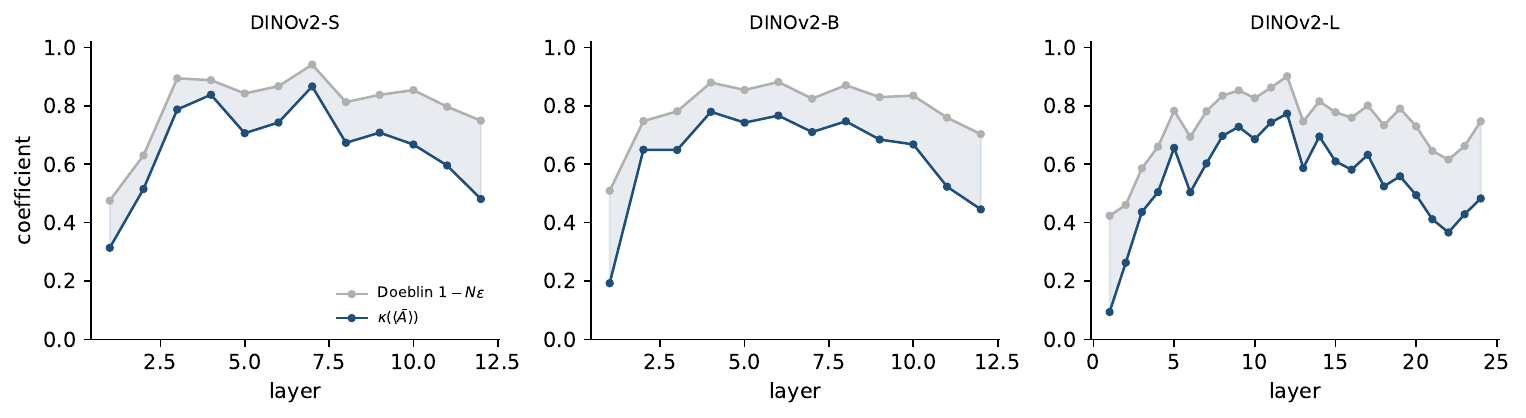}
\caption{Doeblin bound $1-N\varepsilon$ against the directly computed
coefficient, per layer, for $\langle\bar A^{(\ell)}\rangle$ on the patch subchain of DINOv2-S,
-B and -L. The shaded gap is what the entry-wise route gives up.}
\label{fig:entrywise}\end{figure}}{}

\subsection{When one token carries the coefficient}\label{sec:carrier}

Because $\kappa(A)=\max_{i}c_i$ with $c_i:=\max_j\tfrac12\|A_{i\cdot}-A_{j\cdot}\|_1$,
the coefficient decomposes over tokens. A sharper question is which tokens are
\emph{load-bearing}: with
$\kappa_{-i}(A):=\tfrac12\max_{j,k\ne i}\|A_{j\cdot}-A_{k\cdot}\|_1$ --- the
coefficient restricted to pairs avoiding token $i$, rows unchanged, \emph{not}
the deleted-and-renormalised submatrix --- define
$\Delta_i:=\kappa(A)-\kappa_{-i}(A)$, the drop when token $i$ is barred from the
maximum. By construction $\Delta_i>0$ only for tokens belonging to every
$\kappa$-attaining pair, so at most two tokens have a positive score.

We apply it to $\langle\bar A^{(\ell)}\rangle$ on the full DINOv2 token set, registers included, and call the token with the largest $\Delta_i$ the layer's
\emph{carrier}. The top carriers fall into two well-separated groups (Table~\ref{tab:carrier},
Figure~\ref{fig:carrier}). At roughly half the layers of each model the carrier
is a register or \texttt{cls} token with $\Delta$ up to $0.16$; at the others it is
an ordinary patch with $\Delta$ of at most a few thousandths, meaning that no
single token matters. In the sample of Table~\ref{tab:carrier} and
Figure~\ref{fig:carrier} the two groups overlap only at layer $23$ of DINOv2-L,
where a register carries $\Delta=1.8\times10^{-3}$; in four further independent
draws of $512$ images, \texttt{cls} carries that layer and the groups do not
overlap. The carrier is also usually one \emph{specific} register
(Table~\ref{tab:carrier}, last column), and this is stable across the five
draws: the modal register is the same in every draw for all three models, and
the carrier token itself is identical across draws at all $12$ layers of
DINOv2-S and at $23$ of the $24$ layers of DINOv2-L. The other registers and
\texttt{cls} carry the rest. The four register tokens
\citep{darcet2024registers} are therefore not interchangeable with respect to
mixing.

\IfFileExists{fig_carrier.pdf}{%
\begin{figure}[t]\centering
\includegraphics[width=\textwidth]{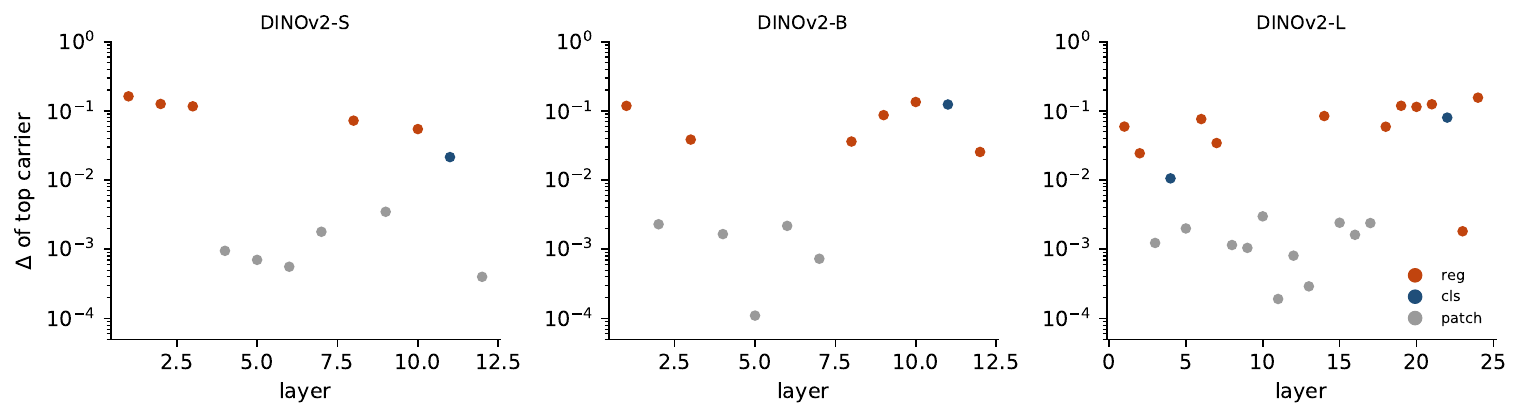}
\caption{Leave-one-out score of the top carrier at each layer, by token role
(log scale), for $\langle\bar A^{(\ell)}\rangle$ of DINOv2, all tokens included. Register and \texttt{cls} carriers mostly sit one to two orders of
magnitude above patch carriers; the exception is layer $23$ of DINOv2-L.}
\label{fig:carrier}\end{figure}}{}

\begin{table}[t]
\centering
\small
\begin{tabular}{lccccc}
\toprule
model & $L$ & layers with reg/\texttt{cls} carrier & $\max_\ell\Delta$
& median $\Delta$ (patch carrier) & modal carrier\\
\midrule
DINOv2-S & 12 & 6 & $0.163$ & $8.3\times10^{-4}$ & \texttt{reg\#1} ($4/6$)\\
DINOv2-B & 12 & 7 & $0.135$ & $1.7\times10^{-3}$ & \texttt{reg\#4} ($5/7$)\\
DINOv2-L & 24 & 13 & $0.156$ & $1.2\times10^{-3}$ & \texttt{reg\#1} ($8/13$)\\
\bottomrule
\end{tabular}
\caption{Leave-one-out token scores for $\langle\bar A^{(\ell)}\rangle$ of DINOv2, all tokens
included. For each token $i$, $\Delta_i$ is the drop in
$\kappa$ when $i$ is excluded from the maximum, and the layer's carrier is the
token with the largest $\Delta_i$. Columns: the number of layers whose carrier is
a register or \texttt{cls}; the largest $\Delta$ over all layers; the median
$\Delta$ at layers whose carrier is a patch; and the modal carrier, the token that
is the carrier most often, with the number of register-or-\texttt{cls} layers it
carries.}
\label{tab:carrier}
\end{table}

\section{Limitations and future work}\label{sec:limits}

The exactness result is an empirical regularity, not a property of trained
transformers in general. On content tokens it holds for essentially every input
in DINOv2 and the two language models we test; the two supervised vision
transformers fail it for some inputs late in depth, including the special
tokens breaks it in DINOv2 and the language models, and FT-Transformer trained on tabular data largely
lacks it. We do not know what governs it --- why the special tokens break it, or
why the supervised models fail late --- and column dominance, the obvious
candidate, is sufficient but does not explain it (Section~\ref{sec:averaged}). The evidence has further limits. Each pretrained
model is tested on one dataset, and the tabular evidence rests on one architecture and
two datasets. The residual weight $\alpha=\tfrac12$ follows
\citet{biccari2026} (M3), and the
main results concern content tokens (M4). We have not tested
decoder-only models, where causal masking changes what the coefficient measures
(Section~\ref{sec:emp}).

Three directions follow. The first is to explain the pattern: to find the
property of trained attention that the condition tracks, why the special tokens
break it, and why it fails late in depth in the supervised vision transformers
but not in DINOv2. A related question is whether the columns in which column
dominance fails can locate such tokens --- attention sinks, or the high-norm
artefact patches of models trained without registers
\citep{darcet2024registers} --- from attention alone. The second is to widen the
evidence, to further tabular architectures such as TabPFN and to larger and
decoder-only language models. The third is practical: the coefficient can be
computed from the attention matrices of a single forward pass, and we plan to
examine whether it can support more efficient training or the pruning of
attention heads.

\section{Related work}\label{sec:related}

\paragraph{Direct antecedent.}
\citet{biccari2026} apply Doeblin--Dobrushin contraction theory to attention
rollout in tabular transformers, and the framing adopted here is theirs: the
Dobrushin coefficient of the head-averaged attention matrix, the residual
modelled as $\tfrac12(I+A)$, the layerwise estimate
$\kappa(\tfrac12(I+A))\le\tfrac12(1+\kappa(A))$ (their (A.6), our
Lemma~\ref{prop:bound}) and its composition over depth, the entry-wise bound
$\kappa\le1-N\varepsilon$ (their Proposition~A.2), and the observation that the
coefficient decreases with depth. Their principal result, Theorem~2.3, shows that
a rollout with a small coefficient is close to a rank-one matrix. We take up a
question their treatment leaves open: they bound the coefficient of each residual
factor from above, $\kappa(\Aa^{(\ell)})\le\tfrac12\big(1+\kappa(A^{(\ell)})\big)$
(their (A.6)), but do not determine when this bound is attained.
Proposition~\ref{prop:exact} answers that question, and
Corollary~\ref{cor:floor} turns the answer into a statement about their depth
bound.

\paragraph{Rank collapse and oversmoothing.}
That pure self-attention loses rank with depth is established:
\citet{dong2021} show doubly exponential convergence to a rank-one matrix, and
identify residual connections and the MLP as what prevents it. The residual
connection leaves the same mark on the Dobrushin coefficient of each rollout
factor: at $\alpha=\tfrac12$, $\kappa(\Aa^{(\ell)})\ge(N-2)/(2(N-1))\approx\tfrac12$
at every layer \citep[Lemma~A.4]{biccari2026}, so a bound built layer by layer
cannot certify contraction faster than about a factor of two per layer. And
Section~\ref{sec:emp} shows that what prevents collapse lies outside what the
rollout models:
in DINOv2-L the attention rollout is close to rank one at $24$ layers, yet the
representations do not collapse. In the graph setting the same
phenomenon is known as oversmoothing \citep{oono2020}.

\paragraph{Ergodicity of attention products.}
\citet{wu2023oversmoothing} analyse attention-based graph neural networks as
products of inhomogeneous row-stochastic matrices and prove oversmoothing at an
exponential rate; transformers fall within their setting as attention on a
complete graph. Their analysis concerns the limit of infinite depth; ours concerns the residual
factor of each layer at finite depth.

\paragraph{Depth as a dynamical system.} \citet{jacobs2026} treat ViT depth as a flow and
segment it into contiguous phases of reused computation. \citet{vo2025} characterise token limit points in deep selective SSMs
under an assumption of time-independent parameters, and \citet{geshkovski2023} study
clustering in self-attention dynamics in a continuous-time limit. The coefficient used
here requires neither homogeneity nor a limit, at the cost of giving a bound rather than
a rate.

\paragraph{Ergodicity coefficients.} $\kappa$ is due to \citet{dobrushin1956}; weak
ergodicity of inhomogeneous chains to \citet{hajnal1958}; \citet{seneta1979,seneta2006}
develops the general theory, and \citet{ipsen2011} treat the family of
coefficients defined by vector norms.

\appendix
\titleformat{\section}[block]{\normalfont\Large\bfseries}{Appendix \thesection}{1em}{}

\section{FT-Transformer on tabular data}\label{app:tabular}

We train FT-Transformer with the authors' package \citep{gorishniy2021} on their
own data files, which fix the split of each dataset into training, validation and
test sets. Preprocessing follows their code and is the same for both datasets:
missing values, where present, are replaced by training-set column means; a small
noise of relative size $10^{-3}$ is added to the training features before a
quantile transform to a normal distribution is fitted, which breaks ties among
repeated values; and the targets of California Housing, the regression task, are
standardised. We use their default configuration of three blocks and, on
California Housing, also six blocks. In the last block the
package computes attention only for the \texttt{cls} token, so that block's
attention is a single row rather than a token-mixing matrix, and we test the
other blocks. Training stops once validation performance has not improved for
$16$ epochs, with no hyperparameter tuning. Attention is tested on $4000$ test
rows, each row being one input: per row, and averaged over rows. We run three seeds per configuration and
compare with the authors' own runs for the same seeds, and with gradient boosting
(scikit-learn's \texttt{HistGradientBoosting}, default settings) and a linear
model (linear regression for California Housing, logistic regression for Higgs)
fitted on the same split (Table~\ref{tab:ftt}).

\begin{table}[h]
\centering
\small
\setlength{\tabcolsep}{3pt}
\begin{tabular}{lccccccccc}
\toprule
configuration & $N$ & metric & FT-T & authors & boosting & linear & features & all & averaged\\
\midrule
California, 3 blocks & $9$  & RMSE     & $0.473$--$0.475$ & $0.463$--$0.465$ & $0.463$ & $0.731$ & $18.7\%$ & $16.1\%$ & $2/6$\\
California, 6 blocks & $9$  & RMSE     & $0.468$--$0.475$ & ---              & $0.463$ & $0.731$ & $11.3\%$ & $9.8\%$  & $3/15$\\
Higgs, 3 blocks      & $29$ & accuracy & $0.725$--$0.728$ & $0.721$--$0.724$ & $0.716$ & $0.630$ & $44.4\%$ & $32.3\%$ & $3/6$\\
\bottomrule
\end{tabular}
\caption{FT-Transformer (FT-T), three seeds per configuration. Performance
entries give the lowest and highest test value over the seeds: our runs (FT-T),
the authors' own runs with the same seeds (available for their default of three
blocks), gradient boosting and a linear model. $N$ is the number of tokens,
including \texttt{cls}. \emph{Features} and \emph{all}: share of row--layer pairs
for which the row's head-averaged attention matrix satisfies
condition~\eqref{eq:cond}, restricted to the feature tokens and on all tokens,
over the $4000$ test rows and three seeds. \emph{Averaged}: number of tested
layers, over the three seeds, for which $\langle\bar A^{(\ell)}\rangle$,
restricted to the feature tokens, satisfies it.}
\label{tab:ftt}
\end{table}

\section{Numerical precision}\label{app:precision}

The models compute attention in single precision, which carries about seven
significant digits. Computing the coefficient and the gap in single precision as
well would blur exactly the distinction we need: on random matrices of the same
size that satisfy the condition, the gap computed in single precision comes out as
large as $1.2\times10^{-7}$, the size of the smallest genuine failures we
observe, while in double precision it stays below $3\times10^{-16}$. We therefore
convert each attention matrix to double precision, with about sixteen
significant digits, before computing its coefficient and gap. The conversion is
exact, and a gap that is zero in exact arithmetic then appears as a number of
order $10^{-16}$, far below any genuine failure.

We count a gap as zero when it is below $10^{-10}$. For the head-averaged
matrices of the pretrained models this choice hardly matters: of the
$562\,752$ such gaps we computed, on content tokens and on all tokens, all but
one are either below $10^{-14}$ or at least $10^{-7}$, so any threshold between those values gives the same counts ---
a separation that exists only because the gaps are computed in double precision
--- and the one exception, $6\times10^{-8}$, alters a single count. For individual heads
a few gaps fall in between: moving the threshold anywhere in that range changes
the shares by at most $0.25$ percentage points on content tokens, and by up to
$1.6$ on the full token sets of DINOv2. Finally, as a cross-check, evaluating
condition~\eqref{eq:cond} directly, by locating the pairs that attain $\kappa$,
gives the same verdict as the gap at all $6144$ image--layer pairs of DINOv2-S and
of ViT-B/16 --- including the $2190$ pairs of ViT-B/16 at which the condition
fails, so the two tests agree where the bound is not attained as well as where it
is.

\section{Failures layer by layer}\label{app:perlayer}

Table~\ref{tab:perlayer} lists, for each pretrained model of
Table~\ref{tab:perinput}, the layers at which some input fails condition~\eqref{eq:cond} and
the number of failing inputs out of $512$, on content tokens and on all tokens.
At every other layer every input satisfies it.

\begin{table}[h]
\centering
\small
\begin{tabular}{lcp{9.2cm}}
\toprule
model & layers & failing inputs out of $512$, by layer\\
\midrule
\multicolumn{3}{l}{\emph{content tokens}}\\
DINOv2-S               & $12$ & none\\
DINOv2-B               & $12$ & layer $1$: $7$\\
DINOv2-L               & $24$ & layer $1$: $3$\\
DINOv2-B, no registers & $12$ & layer $1$: $2$; layer $11$: $1$\\
DeiT-B                 & $12$ & layer $11$: $50$; layer $12$: $512$\\
ViT-B/16               & $12$ & layer $7$: $138$; $8$: $428$; $9$: $449$; $10$: $431$; $11$: $234$; $12$: $510$\\
RoBERTa-base           & $12$ & layer $9$: $15$; layer $12$: $4$\\
DistilBERT             & $6$  & none\\
\midrule
\multicolumn{3}{l}{\emph{all tokens}}\\
DINOv2-S               & $12$ & layer $1$: $509$; $10$: $1$; $12$: $33$\\
DINOv2-B               & $12$ & layer $1$: $509$; $9$: $50$; $10$: $214$; $11$: $136$; $12$: $88$\\
DINOv2-L               & $24$ & layer $1$: $512$; $2$: $26$; $3$: $1$; $7$: $20$; $14$: $320$; $18$: $9$; $19$: $253$; $20$: $436$; $21$: $169$; $22$: $7$; $23$: $69$; $24$: $401$\\
DINOv2-B, no registers & $12$ & layer $1$: $2$; $11$: $153$; $12$: $2$\\
DeiT-B                 & $12$ & layer $11$: $61$; $12$: $511$\\
ViT-B/16               & $12$ & layer $7$: $142$; $8$: $434$; $9$: $452$; $10$: $433$; $11$: $234$; $12$: $511$\\
RoBERTa-base           & $12$ & layer $8$: $3$; $9$: $427$; $12$: $204$\\
DistilBERT             & $6$  & layer $6$: $98$\\
\bottomrule
\end{tabular}
\caption{Layers at which some input fails condition~\eqref{eq:cond}, with the number of
failing inputs out of $512$, on content tokens and on all tokens. All other
layers are exact for every input.}
\label{tab:perlayer}
\end{table}

\section{Robustness to the input size}\label{app:ntokens}

Each model was trained at a fixed input size. This appendix therefore does not
compare models; it checks, within each model, whether the per-input verdict of
Section~\ref{sec:perinput} changes when the model is evaluated at other input
sizes.

Figure~\ref{fig:ntokens} repeats the per-input test on content tokens for DINOv2-B
and ViT-B/16 at resolutions of $112$, $224$, $336$ and $448$ pixels ($64$--$1024$
and $49$--$784$ patches), and for the two language models at sequence lengths of
$32$, $64$, $128$, $256$ and $512$ tokens ($30$--$510$ text tokens). We use $256$
images per resolution, which keeps the largest resolutions affordable and at
$224$ pixels reproduces the $512$-image shares of Table~\ref{tab:perinput}, and
$512$ sequences per length. ViT-B/16 was trained at $224$ pixels, and the
language models on sequences of up to $512$ tokens.

The verdicts do not change. DINOv2-B is exact at $99.8$--$99.97\%$ of image--layer
pairs at every resolution, with its few failures confined to layer~$1$. ViT-B/16
stays between $61\%$ and $65\%$, failing only at layers $7$--$12$. DistilBERT
remains exact for essentially every input. RoBERTa declines slightly, from
$99.97\%$ to $98.1\%$, as its failures at layers $9$ and $12$ become more frequent
with length. The averaged verdict of every model is likewise the same at every
input size. Exactness also does not peak at the training size: RoBERTa, trained
only on full-length sequences of $512$ tokens \citep{liu2019roberta}, is exact
least often at that length.

What does change is the size of the failures. In ViT-B/16 their median falls from
$12.6\%$ to $1.2\%$ of the attention's part, $\tfrac12\kappa(\bar A)$, as the
number of patches increases, consistent with Remark~\ref{rem:size}. In RoBERTa
the median rises slightly, from $4.2\%$ to $5.4\%$: at its failing layers the
coefficient is close to $1$, so attention there is concentrated, and the bound of
Remark~\ref{rem:size} does not force the failures to be small.

\IfFileExists{fig_ntokens.pdf}{%
\begin{figure}[!ht]\centering
\includegraphics[width=\textwidth]{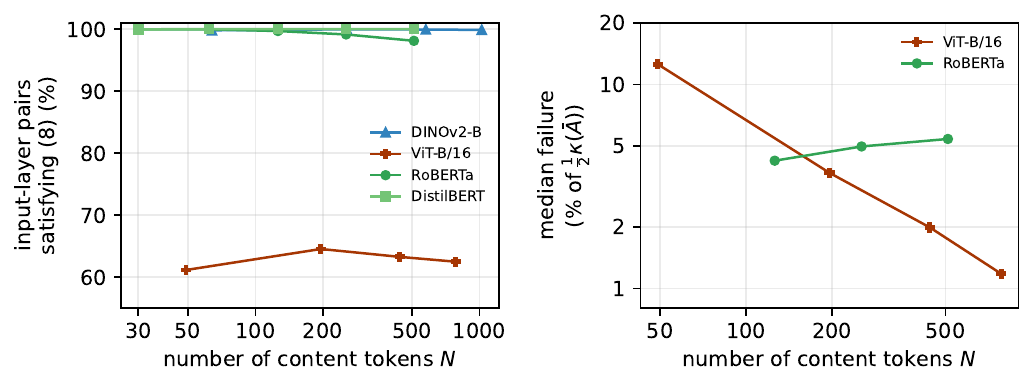}
\caption{Per-input test on content tokens at several input sizes; $N$ is the
number of content tokens. Left: percentage of input--layer pairs whose
head-averaged attention matrix satisfies condition~\eqref{eq:cond}. Right: median
gap among the failing pairs, as a percentage of $\tfrac12\kappa(\bar A)$, for the
two models with enough failures for a median; RoBERTa is shown from $N=126$.}
\label{fig:ntokens}\end{figure}}{}

\FloatBarrier
\renewcommand{\bibfont}{\small}
\bibliographystyle{plainnat}
\bibliography{reference}

\end{document}